\documentclass[10pt,letterpaper]{article}
\usepackage[T1]{fontenc}
\usepackage{times}
\usepackage[textwidth=5.75in,textheight=9in,centering]{geometry}
\usepackage{microtype}
\usepackage{titlesec}
\usepackage{titling}
\usepackage{enumitem}
\usepackage[authoryear,round]{natbib}
\setcitestyle{citesep={;},aysep={,},yysep={;}}

\titleformat{\section}{\large\scshape\raggedright}{\thesection}{1em}{}
\titleformat{\subsection}{\normalsize\scshape\raggedright}{\thesubsection}{1em}{}
\titleformat{\subsubsection}{\normalsize\scshape\raggedright}{\thesubsubsection}{1em}{}
\titlespacing*{\section}{0pt}{2ex plus .5ex minus .2ex}{1.5ex plus .3ex minus .2ex}
\titlespacing*{\subsection}{0pt}{1.8ex plus .5ex minus .2ex}{.8ex plus .2ex}
\titlespacing*{\subsubsection}{0pt}{1.5ex plus .5ex minus .2ex}{.5ex plus .2ex}
\setlist{topsep=4pt,itemsep=2pt,parsep=0pt,partopsep=0pt}
\pretitle{\noindent\LARGE\scshape\raggedright}
\posttitle{\par\vspace{1em}}
\preauthor{\noindent\normalsize\begin{tabular}[t]{@{}l@{}}}
\postauthor{\end{tabular}\par}
\renewcommand{\and}{\end{tabular}\hspace{10em}\begin{tabular}[t]{@{}l@{}}}
\predate{}
\postdate{}
\renewenvironment{abstract}
  {\begin{center}\large\scshape\abstractname\end{center}\begin{quote}\normalsize}
  {\end{quote}\vspace{1ex}}
\usepackage{amsmath,amsfonts,bm}

\def\eqref#1{equation~\ref{#1}}

\def\1{\bm{1}}

\DeclareMathAlphabet{\mathsfit}{\encodingdefault}{\sfdefault}{m}{sl}
\SetMathAlphabet{\mathsfit}{bold}{\encodingdefault}{\sfdefault}{bx}{n}

\newcommand{\E}{\mathbb{E}}

\newcommand{\R}{\mathbb{R}}

\newcommand{\KL}{D_{\mathrm{KL}}}
\newcommand{\Var}{\mathrm{Var}}

\DeclareMathOperator{\vecop}{vec}
\DeclareMathOperator{\tr}{tr}

\usepackage{url}
\usepackage{amsmath}
\usepackage{amsthm}
\usepackage{algorithm}
\usepackage{algpseudocode}
\usepackage{graphicx}
\usepackage{aliascnt}
\usepackage[hidelinks]{hyperref}
\hypersetup{
  pdftitle={WaterKron and FlipFlop Hessian: Information-Theoretically Grounded Quantization with Kronecker-Factored Hessians},
  pdfauthor={Johann Birnick and Rayan Saab}
}
\usepackage[capitalize,noabbrev,nameinlink]{cleveref}
\makeatletter
\def\theHALG@line{\thealgorithm.\arabic{ALG@line}}
\makeatother
\usepackage{mathtools}
\usepackage[para]{footmisc}

\theoremstyle{plain}
\newtheorem{theorem}{Theorem}[section]
\newaliascnt{lemma}{theorem}

\aliascntresetthe{lemma}
\crefname{lemma}{Lemma}{Lemmas}
\newaliascnt{corollary}{theorem}
\newtheorem{corollary}[corollary]{Corollary}
\aliascntresetthe{corollary}
\crefname{corollary}{Corollary}{Corollaries}
\theoremstyle{remark}
\newtheorem*{remark}{Remark}

\newcommand{\Z}{\mathbb{Z}}
\newcommand{\cN}{\mathcal{N}}

\title{WaterKron and FlipFlop Hessian:\\ Information-Theoretically Grounded Quantization with Kronecker-Factored Hessians}

\author{
\textbf{Johann Birnick} \\
Department of Mathematics \\
University of California San Diego \\
La Jolla, CA 92093, USA \\
\texttt{jbirnick@ucsd.edu}
\and
\textbf{Rayan Saab} \\
Department of Mathematics and HDSI \\
University of California San Diego \\
La Jolla, CA 92093, USA \\
\texttt{rsaab@ucsd.edu}
}

\date{}

\begin{document}

\maketitle

\begin{abstract}
How should a Kronecker-factored Hessian approximation be chosen for post-training quantization?
We address this question through \emph{WaterKron}, which combines two-sided GPTQ with row- and column-dependent waterfilling scales and entropy coding.
We derive its high-rate distortion with respect to the full Hessian using an explicit Kronecker-Hessian mismatch factor $\Phi$.
This factor quantifies the asymptotic distortion penalty due to the Kronecker Hessian approximation and provides a criterion for selecting the factors optimally.
Minimizing $\Phi$ leads to a Gaussian covariance-fitting problem with classical ``flip-flop'' updates.
We thus give a rate-distortion justification for using the resulting \emph{FlipFlop Hessian} in quantization.
We evaluate it empirically, finding that the FlipFlop Hessian consistently improves KL divergence and perplexity over input-only, marginal, and Frobenius-based Hessian choices.
\end{abstract}

\section{Introduction}
\label{sec:introduction}

\emph{Post-training quantization} of neural networks is the process of taking a fully trained neural network and converting its high-precision weights into a low-precision format, with the goal of retaining network accuracy while reducing memory requirements.
Usually, each linear module $W$ is quantized separately.
Hence the goal is to find a low-precision approximation $V$ of $W$.

There exist different classes of algorithms for finding such a $V$, which can also be combined.
Some methods focus on making the entries of $W$ incoherent and thus easier to quantize, for example by utilizing random or learned rotations \citep{chee2023quip,ashkboos2024quarot,liu2025spinquant}.
Other methods change the base quantizer that is applied to a single entry or a small group of entries, for example by using codebooks or lattices instead of simple rounding \citep{tseng2024quip,savkin2025nestquant}.
A third class of methods does ``adaptive rounding'' by conditioning the quantization of later entries within $W$ such that they correct the errors introduced by quantizing earlier entries.

Our work falls within the realm of this third class of methods, more concretely within its largest subclass, which might be labeled ``error propagation under a quadratic loss''.
A prominent example within this line of research is the GPTQ algorithm by \citet{frantar2023optq}.
In order to measure the distortion when $W$ is replaced by $V$, the algorithm uses a quadratic loss.
However, instead of just looking at $\lVert V - W \rVert_F$, for which simple ``round to nearest'' is the optimal quantization algorithm, it takes into account the input $x \in \R^n$ of the linear module $W$, and measures the distortion as $\E \lVert V x - W x \rVert_2^2$.
In practice, the expectation is taken over a calibration dataset.
This algorithm has been analyzed and extended from different directions \citep[e.g.,][]{chee2023quip,tseng2024quip,tseng2024qtip,zhang2026qronos,tseng2026modelpreserving,birnick2026lattice,chen2026geometry,zhang2025provable,ordentlich2026highratematmul2,lifar2026watersic}, but central to this paper are:
\begin{enumerate}
\item The extension to a \textbf{two-sided geometry} \citep{kim2025boa,tseng2026modelpreserving,birnick2026bakron}.
  In standard GPTQ, the rows of $W \in \R^{m \times n}$ are quantized independently.
  Formally, when we vectorize the matrices $W,V$ into $w = \vecop(W) \in \R^{m n}, v = \vecop(V)$, then the Hessian of the loss $\E \lVert (V - W) x \rVert_2^2 /2$ with respect to $v$ is $\E[x x^T] \otimes I_m$.
  While the $\E[x x^T]$ factor corresponds to the input geometry that GPTQ exploits, the identity factor $I_m$ means that all output features are treated independently and identically.
  There is a natural extension of GPTQ that can deal with arbitrary Kronecker-factored Hessians $A \otimes B$ and reduces to standard GPTQ if $B = I_m$.
  This variant of the algorithm effectively minimizes the loss $\lVert (B^{1/2})^T (V - W) A^{1/2} \rVert_F^2$; in particular, the factor $B$ in the Hessian also informs the algorithm of the output geometry of the linear module $W$.
  For the exact choice of the Hessian factors $A$ and $B$, multiple options have been explored in the literature (see \cref{sec:flipflophessian} and \cref{sec:experiments}).

\item The \textbf{information-theoretic} view and waterfilling formula \citep{lifar2026watersic,ordentlich2026highratematmul2}.
  Standard GPTQ allocates the same number of bits for every entry of $V$.
  An alternative is to use \emph{entropy coding} \citep{chen2026geometry,lifar2026watersic} for the quantized matrix.
  In particular, this allows us to ask about precision/bit/rate allocation:
  We have the freedom to store some entries in ``higher precision'' (usually costing more bits), and other entries in ``lower precision'' (using fewer bits).
  Concretely, ``precision'' corresponds to the scaling factors of the quantization grid.
  For example, Huffman-GPTQ \citep{chen2026geometry}, which does use Huffman coding and thus encodes some entries with more bits and others with fewer, still distributes the precision/rate uniformly across all entries.
  If the variance $\Var(x_j)$ of the $j$\textsuperscript{th} input channel is known, a more effective way to distribute precision would be to assign to the $j$\textsuperscript{th} column of $W$ a scale proportional to $1/\sqrt{\Var(x_j)}$.
  However, \citet{ordentlich2026highratematmul2,lifar2026watersic} notice that even that is suboptimal:
  In GPTQ, we can exploit linear correlation between different input channels, and what matters is only the residual variance that remains after accounting for this correlation.
  So the scale for the $j$\textsuperscript{th} column should be chosen proportional to $1/\sqrt{\min_{e_{j+1},\dots,e_n}\Var(x_j - e_{j+1} x_{j+1} - \dots - e_n x_n)}$, and this is essentially optimal.
\end{enumerate}

In this paper, we apply the information-theoretic lens to the two-sided algorithm.
First, this yields \emph{WaterKron}, an extension of the existing waterfilling allocation, WaterSIC, to the Kronecker-factored geometry.
Then we characterize its distortion in the full Hessian geometry, identify the penalty caused by the Kronecker approximation, and use it to find the optimal choice of Kronecker approximation.
Concretely, we make the following contributions:

\begin{enumerate}
\item \textbf{WaterKron: two-sided precision allocation.}
We extend the high-rate waterfilling results to two-sided GPTQ, deriving optimal row and column scales for the analyzed
scalar-grid alphabet.
Under the Gaussian and entropy-coding assumptions, this retains the approximately $0.255$-bit asymptotic gap to the
information-theoretic limit for the Kronecker distortion (\cref{thm:twosidedgptq}).
We demonstrate empirically that this outperforms allocating the waterfilling precision using only the input-sided Hessian (WaterSIC).

\item \textbf{Quantifying the cost of Hessian mismatch.}
We characterize WaterKron's high-rate distortion in the \emph{full} Hessian
geometry under the same assumptions. We introduce a mismatch factor $\Phi$
that quantifies the multiplicative distortion penalty caused by the
Kronecker approximation, providing a criterion for choosing its factors
(\cref{thm:twosidedgptq2}).

\item \textbf{The FlipFlop Hessian.}
Minimizing the mismatch factor, we recover the flip-flop algorithm from Gaussian covariance fitting.
We propose the resulting \emph{FlipFlop Hessian approximation}, which is justified by its distortion-minimizing objective and, to the best of our knowledge, has not been explored in this line of quantization research (\cref{sec:flipflophessian}).
We demonstrate empirically that it outperforms previous choices of Hessian approximations from the quantization literature.
\end{enumerate}

Our results extend the high-rate waterfilling theory of
\citet{ordentlich2026highratematmul2,lifar2026watersic} to Kronecker-factored quadratic losses.
They recover (and improve), as a special case, the row- and column-dependent grid allocation of SoftWater
\citep{cavalcanti2026softwater},
which is designed for the softmax module with a diagonal approximation of the output curvature $B$.

The rest of the paper is structured as follows.
In \cref{sec:background} we recall the setting and previous work: standard GPTQ, its two-sided extension, and the waterfilling precision allocation.
In \cref{sec:twosidedwaterfilling} we first derive the optimal precision allocation for the two-sided algorithm.
In \cref{sec:mismatchfactor} we take an information-theoretic viewpoint on the optimal choice of the Kronecker-factored Hessian approximation.
We introduce the Kronecker-Hessian mismatch factor $\Phi$ and provide a variant of the rate-distortion result that takes the full Hessian into account.
In \cref{sec:flipflophessian}, informed by the mismatch theory, we derive the ideal Kronecker approximation, proposing the FlipFlop Hessian.
Lastly, we provide empirical evidence for the success of our methods in \cref{sec:experiments}.

\section{Setting and Background}
\label{sec:background}

In this section, we first describe the alphabet that is used for the remainder of the paper.
Then we recall the standard GPTQ algorithm, its two-sided extension, and the waterfilling precision allocation.

\textit{Notation.}
For an $m$ by $n$ matrix $M$, $\vecop(M) \in \R^{mn}$ is its column-major vectorization.
Assuming $m=n$, we abbreviate $|M| := \det M$.
If $M$ is additionally positive definite, $\mathrm{Cholesky}(M)$ denotes the unique lower-triangular matrix $L$ with positive diagonal entries $L_{i,i} > 0$ that satisfies $L L^T = M$.
Similarly, $\mathrm{RevCholesky}(M)$ denotes the unique such $L$ that satisfies $L^T L =M$.
$M^{1/2}$ denotes an arbitrary matrix $S$ satisfying $S S^T = M$, the most important choice for this paper being $S = \mathrm{RevCholesky}(M)^T$.
All PSD matrices are assumed to be invertible; in practice, a small damping $\kappa I$ can be added to ensure this.
Lastly, $\{0,1\}^* = \bigcup_{n=0}^\infty \{0,1\}^n$ is the set of finite binary strings.

\subsection{Alphabet}
\label{sec:bg_alphabet}

Recall that the goal of weight-only post-training quantization is to approximate $W \in \R^{m \times n}$ with a matrix $V$ that has lower precision.
For this paper, $V$ has the following structure:
\begin{itemize}
\item At its core, $V$ has essentially \emph{integer entries}.
  We denote this integral matrix by $Z \in \mathbb{Z}^{m \times n}$.
  The integers are unbounded (not clipped), and later encoded with a good coding scheme.
  Thus the memory cost of $Z$ is determined by the entropy of the distribution of its entries.
\item Additionally, there are real-valued \emph{scaling factors} per row and per column.
  Concretely, the $j$\textsuperscript{th} column of $V$ is scaled by $\alpha_j \in \R_{>0}$ and the $i$\textsuperscript{th} row of $V$ is scaled by $\beta_i \in \R_{>0}$.
  These input/output scaling factor vectors $\alpha = (\alpha_1,\dots,\alpha_n) \in \R^n$ and $\beta = (\beta_1,\dots,\beta_m) \in \R^m$ are stored in high precision; their practical memory cost is very small.
\end{itemize}

So $V$ is an integer matrix $Z$ whose rows and columns have been rescaled by $\beta$ and $\alpha$.
Concretely, the scaling factors form a rank-1 matrix $\beta \alpha^T \in \R^{m \times n}$, and we have:
\begin{equation*}
V = \beta \alpha^T \odot Z
\qquad\qquad
V_{i,j} = \alpha_j \beta_i \cdot Z_{i,j}
\end{equation*}

In the algorithms, we consider $V_{i,j}$ as implicitly defined as soon as $Z_{i,j}, \alpha_j, \beta_i$ are set.

Crucially, the scaling factors can be used for distributing precision across rows and columns.
A small scaling factor corresponds to high precision.
For example, if $\alpha_1$ is very small, that means the scaled integer grid of column 1 is very fine-grained, so column 1 is stored in relatively high precision.

\subsection{GPTQ: Adaptive Rounding}
\label{sec:bg_gptq}

\begin{algorithm}[h]
  \caption{GPTQ}
  \label{alg:gptq}
  \begin{algorithmic}
    \State \textbf{Input:} $W \in \R^{m \times n}$, $A = \E[x x^T] \in \R^{n \times n}$, $\alpha \in \R^n$, $\beta \in \R^m$
    \State $L \gets \mathrm{RevCholesky}(A)$
    \State $\tilde{L} \gets L^{-1} = \mathrm{Cholesky}(A^{-1})$
    \State Normalize columns of $\tilde{L}$ by the diagonal entries; i.e., replace $\tilde{L}_{:,j} \gets \tilde{L}_{:,j} / \tilde{L}_{j,j}$ for $j = 1,\ldots,n$.
    \For{$j = 1$ \textbf{to} $n$}
      \State $Z_{:,j} \gets \mathrm{round}(W_{:,j} / (\alpha_j \beta))$
      \State $\Delta \gets V_{:,j} - W_{:,j}$
      \State $W \gets W + \Delta \cdot (\tilde{L}_{:,j})^T$
    \EndFor
    \State \textbf{Output:} $Z$
  \end{algorithmic}
\end{algorithm}

We recall the standard GPTQ algorithm by \citet{frantar2023optq}, described in \cref{alg:gptq}.
The idea of this algorithm is to exploit the correlation between the $n$ input channels of the linear module $W \in \mathbb{R}^{m \times n}$.
It does so by aiming to minimize the following quadratic distortion:
\begin{equation}
\label{eq:distortiongptq}
D_A = \frac{1}{m n} \E \lVert V x - W x \rVert_2^2 = \frac{1}{m n} \lVert (V - W) \cdot \E[x x^T]^{1/2} \rVert_F^2 = \frac{1}{m n} \lVert (V - W) L^T \rVert_F^2
\end{equation}
The expectation here, and in \cref{sec:flipflophessian}, is taken over an (empirical) calibration data distribution, $x \in \R^n$ is an input to the linear module $W$, and $L = \mathrm{RevCholesky}(\E[x x^T])$.
The (reverse) Cholesky decomposition $L$ encodes linear correlation between the input channels, and sits at the heart of the GPTQ algorithm.
The algorithm quantizes the input channels sequentially, and modifies $W$ along the way in order to partially correct the quantization error introduced by quantizing earlier input channels, which is only possible due to the correlation between those channels.
See \cref{alg:gptq}.

\subsection{Two-Sided Extension and Kronecker-Factored Hessians}
\label{sec:bg_twosided}

Later \citep{kim2025boa,tseng2026modelpreserving,birnick2026bakron} it was realized that GPTQ can be naturally extended to also take the output geometry into account.
Indeed, the Hessian of the minimization problem $\E \lVert V x - W x \rVert_2^2 / 2$ with respect to $v = \vecop(V)$ is $\E[x x^T] \otimes I_m$.
While the $\E[x x^T]$ factor represents the input correlations that GPTQ exploits, the $I_m$ factor means that output channels are quantized independently.
The two-sided version of the algorithm instead considers an arbitrary Kronecker-factored Hessian $A \otimes B$, which corresponds to the following two-sided distortion:
\begin{align*}
D_{A,B} &= \frac{1}{m n} \lVert (B^{1/2})^T (V - W) A^{1/2} \rVert_F^2
 &&= \frac{1}{m n} \lVert (\vecop(V)^T - \vecop(W)^T) (A^{1/2} \otimes B^{1/2}) \rVert_F^2  \\
  &= \frac{1}{m n} \lVert L^{(B)} (V - W) (L^{(A)})^T \rVert_F^2 
 &&= \frac{1}{m n} \lVert (\vecop(V)^T - \vecop(W)^T) (L^{(A)} \otimes L^{(B)})^T \rVert_F^2
\end{align*}
Here $L^{(A)} = \mathrm{RevCholesky}(A)$ and $L^{(B)} = \mathrm{RevCholesky}(B)$, and since matrix products distribute over Kronecker factors, $L^{(A)} \otimes L^{(B)} = \mathrm{RevCholesky}(A \otimes B)$.
The identities from the right column, when compared to \cref{eq:distortiongptq}, already reveal how the natural extension of GPTQ to a two-sided geometry works:
We can just call standard GPTQ (\cref{alg:gptq}) with $\vecop(W)^T \in \R^{1 \times m n}$ as the weight matrix and $A \otimes B \in \R^{mn \times mn}$ as the input correlation matrix.
This is described in \cref{alg:twosidedgptq} in ``unvectorized'' notation.
Note that if $B = I_m$, this is equivalent to \cref{alg:gptq}.
For fast implementations of \cref{alg:twosidedgptq}, see \citet{birnick2026bakron, chen2026gptq2d}.

\begin{algorithm}[H]
  \caption{Two-sided extension of GPTQ}
  \label{alg:twosidedgptq}
  \begin{algorithmic}
    \State \textbf{Input:} $W \in \R^{m \times n}$, $A \in \R^{n \times n}$, $B \in \R^{m \times m}$, $\alpha \in \R^n$, $\beta \in \R^m$
    \State $L^{(A)}, L^{(B)} \gets \mathrm{RevCholesky}(A), \mathrm{RevCholesky}(B)$
    \State $\tilde{L}^{(A)}, \tilde{L}^{(B)} \gets (L^{(A)})^{-1}, (L^{(B)})^{-1}$
    \State Normalize columns of $\tilde{L}^{(A)}, \tilde{L}^{(B)}$ by the diagonal entries.
    \For{$j = 1$ \textbf{to} $n$}
      \For{$i = 1$ \textbf{to} $m$}
        \State $Z_{i,j} \gets \mathrm{round}(W_{i,j} / (\alpha_j \beta_i))$
        \State $\Delta \gets V_{i,j} - W_{i,j}$
        \State $W \gets W + \tilde{L}^{(B)}_{:,i} \cdot \Delta \cdot (\tilde{L}^{(A)}_{:,j})^T$
      \EndFor
    \EndFor
    \State \textbf{Output:} $Z$
  \end{algorithmic}
\end{algorithm}

\subsection{Information-Theoretic View and Waterfilling Formula}
\label{sec:bg_waterfilling}

Note that GPTQ (\cref{alg:gptq}) and its two-sided extension (\cref{alg:twosidedgptq}) are at their core just about error propagation using the (reverse) Cholesky factors $L$, and are essentially independent of the alphabet that is used to store the entries.
In particular, they do not postulate specific scaling factors $\alpha, \beta$.
Instead, those scaling factors are chosen up front and handed to the algorithm.
For example, Huffman-GPTQ \citep{chen2026geometry} implicitly chooses $\alpha \propto (1,\dots,1)$ and $\beta \propto (1,\dots,1)$, i.e., a single scalar scaling factor for the whole matrix.

As noted in \cref{sec:bg_alphabet}, the scaling factors can be used to distribute precision/rate across rows and columns non-uniformly.
\citet{ordentlich2026highratematmul2} realized this and derived information-theoretically optimal scaling factors in the case of standard (only input-sided) GPTQ, i.e., \cref{alg:gptq}.
Recall that in this setting we have
\begin{equation*}
A = \E[x x^T] \qquad L = \mathrm{RevCholesky}(A) \qquad \tilde{L} = L^{-1} = \mathrm{Cholesky}(A^{-1})
\end{equation*}
where the expectation is over calibration data.
\citet{ordentlich2026highratematmul2,lifar2026watersic} then postulate the input scaling factor choice $\alpha_j \propto 1/L_{j,j} = \tilde{L}_{j,j}$.
Since they only consider the standard input-sided algorithm, they implicitly choose uniform output scaling factors $\beta \propto (1,\dots,1)$.

They prove that, if $W_{i,j} \sim \cN(0,\sigma_W^2)$ is modeled as iid Gaussian, then \cref{alg:gptq} with this choice of scaling factors is an (almost) information-theoretically optimal quantization scheme for the distortion in \cref{eq:distortiongptq}.
Concretely, they consider arbitrary encoder-decoder pairs: The encoder $f\colon \R^{m \times n} \times \R^{n \times n} \to \{0,1\}^*$ takes $(W,A)$ and produces a self-delimiting (prefix-free) binary string, and the decoder $g\colon \{0,1\}^* \to \R^{m \times n}$ takes the string and produces $V$, so that we have $V = g(f(W,A))$.
For example, one such encoder-decoder pair is given by \cref{alg:gptq} together with an entropy coding scheme to encode/decode $Z$.
(Here we gloss over the encoding of $\alpha,\beta$; see \cref{sec:alphabetacoding}.)
For such a pair, the \emph{distortion} $D_A$ is measured as the expected input-sided distortion from \cref{eq:distortiongptq} and the \emph{rate} $R$ as the expected string length per entry.
\begin{equation}
\label{eq:ratedistortiondef}
D_A = \frac{1}{m n} \E_W \Big[ \lVert (V - W) \cdot A^{1/2} \rVert_F^2 \Big]
\qquad
R = \frac{1}{m n} \E_W \Big[ \mathrm{length}(f(W,A)) \Big]
\end{equation}
Note that at this point $A$ is seen as a deterministic Gram matrix, and the expectation runs over the Gaussian distribution of $W$.
In this setting, they prove the following theorems (reformulated).

\begin{theorem}[IT lower bound]
\label{thm:itlimit}
Suppose that $W_{i,j} \sim \cN(0,\sigma_W^2)$ is iid Gaussian and $A \in \R^{n \times n}$ is positive definite.
Then any encoder-decoder pair $(f,g)$ satisfies:
\begin{equation*}
D_A \geq \sigma_W^2 |A|^{1/n} 2^{-2R}
\end{equation*}
\end{theorem}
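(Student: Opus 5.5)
The plan is to reduce to the classical Gaussian rate-distortion bound via a whitening change of variables, and then use the fact that variable-length prefix-free codes cannot beat mutual information.

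\textbf{Step 1 (whitening).} Let $S = A^{1/2}$ be any square root, so that $S S^T = A$, and set $\tilde W = W S$ and $\tilde V = V S$. Then
\[
\lVert (V-W)A^{1/2}\rVert_F^2 = \lVert \tilde V - \tilde W\rVert_F^2 .
\]
The rows of $\tilde W$ are iid $\cN(0,\sigma_W^2 S^T S)$. The matrix $S^T S$ has the same determinant as $A$. Since $S$ is invertible, a code for $W$ with reconstruction $V$ is equivalent to a code for $\tilde W$ with reconstruction $\tilde V$ at the same rate, so it suffices to lower-bound the squared error for the Gaussian vector $\tilde w = \vecop(\tilde W)$. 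This vector has covariance $\Sigma = \sigma_W^2 (S^T S)\otimes I_m$, with
\[
|\Sigma| = \sigma_W^{2mn}|A|^m .
\]

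\textbf{Step 2 (rate lower-bounds mutual information).} For a prefix-free encoder, Kraft's inequality plus the source-coding converse give
\[
\E[\mathrm{length}(f(W,A))] \ge H(f(W,A)) \ge I(W;V) = I(\tilde w;\tilde v),
\]
using the data-processing inequality and the fact that $V = g(f(W,A))$ is a function of the string. Hence $mnR \ge I(\tilde w;\tilde v)$.

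\textbf{Step 3 (Gaussian lower bound on mutual information).} Write $N = mn$ and $\delta = \E\lVert \tilde v-\tilde w\rVert_2^2 = N D_A$. Then
\[
I(\tilde w;\tilde v) = h(\tilde w) - h(\tilde w\mid\tilde v) \ge h(\tilde w) - h(\tilde w-\tilde v).
\]
By maximum entropy under a second-moment constraint, $h(\tilde w-\tilde v)\le \tfrac N2\log(2\pi e\,\delta/N)$. Also $h(\tilde w)=\tfrac12\log((2\pi e)^N|\Sigma|)$. Combining these,
\[
NR \ge \tfrac12\log_2\frac{|\Sigma|}{D_A^N},
\]
which rearranges to
\[
D_A \ge |\Sigma|^{1/N}2^{-2R} = \sigma_W^2|A|^{1/n}2^{-2R}.
\]
This is the Shannon lower bound. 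It holds at every rate and needs no reverse waterfilling, which is why the bound involves the geometric mean $|A|^{1/n}$ of the eigenvalues rather than a waterfilling expression.

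\textbf{Main obstacle.} The delicate step is Step 2, specifically the inequality $\E[\mathrm{length}] \ge H(f(W,A))$ for a prefix-free code over a continuous source. I would handle it by noting that the codeword $f(W,A)$ is a discrete random variable, so $H$ of it is well defined. Kraft's inequality then gives $\E[\ell]\ge H$ directly, with no $+1$ slack. For the entropy-to-mutual-information step, $H(f)\ge I(W;f)\ge I(W;V)$ follows from data processing. A minor point is to make sure the encoding of $\alpha,\beta$ is counted inside $f$, so the bound applies to the whole scheme. Another is the case where $D_A$ is infinite or the reconstruction is degenerate; there the bound holds trivially.
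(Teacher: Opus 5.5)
Your argument is correct. Whitening by $S$ with $|S^TS|=|A|$, the one-shot converse $mnR=\E[\mathrm{length}(f(W,A))]\ge H(f(W,A))\ge I(W;V)$ for prefix-free codes, and the Shannon lower bound $I(\tilde w;\tilde v)\ge h(\tilde w)-\frac{N}{2}\log(2\pi e D_A)$ together give exactly $D_A\ge\sigma_W^2|A|^{1/n}2^{-2R}$ at every rate.

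The paper does not prove this statement itself; it imports it from \citet{ordentlich2026highratematmul2,lifar2026watersic}. Your proof is a self-contained version of the standard Gaussian rate-distortion converse behind that citation. It also shows why, as \cref{sec:alphabetacoding} claims, letting the decoder see the deterministic $A$ does not weaken the bound.
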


\begin{theorem}
\label{thm:gptq}
Suppose that $W_{i,j} \sim \cN(0,\sigma_W^2)$ is iid Gaussian and $A \in \R^{n \times n}$ is positive definite.
Additionally fix an entropy coding scheme that encodes $Z$ with expected length $H(Z) + o(m n)$.
Then \cref{alg:gptq} with scale choice $\alpha_j = \gamma \cdot |A|^{1 / 2n} / L_{j,j}$ and $\beta_i = 1$ achieves
\begin{equation*}
D_A = \frac{\gamma^2}{12} |A|^{1/n} \cdot (1 + o(1))
\qquad
\qquad
R = \frac{1}{2} \log(2 \pi e \sigma_W^2) - \log \gamma + o(1)
\end{equation*}
as $\gamma \to 0$ and $mn \to \infty$.
This yields the rate-distortion tradeoff
\begin{equation*}
D_A = \frac{2 \pi e}{12} \cdot \sigma_W^2 |A|^{1/n} 2^{-2R} \cdot (1 + o(1))
\end{equation*}
which, in the limit, is within $\frac{1}{2} \log(2 \pi e / 12) \approx 0.255$ bits of the limiting rate from \cref{thm:itlimit}.
\end{theorem}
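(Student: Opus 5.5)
The plan is to reduce \cref{alg:gptq} to independent scalar rounding problems by rewriting the distortion and the error dynamics through the reverse Cholesky factor $L$. First I will show that the final quantization error $E = V - W_{\mathrm{orig}}$ satisfies $E L^T = \tilde\Delta\, \mathrm{diag}(L_{j,j})$. Here $\tilde\Delta_{:,j}$ is the rounding error committed at step $j$, namely $V_{:,j} - W^{(j)}_{:,j}$, where $W^{(j)}$ is the updated weight matrix at that step.

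To get this identity, I write the updates as $W^{(n)} = W + \tilde\Delta\, \tilde N^T$, where $\tilde N$ is $\tilde L = L^{-1}$ with its columns normalized to unit diagonal. Column $j$ is finalized with $V_{:,j} = W^{(j)}_{:,j} + \tilde\Delta_{:,j}$. Since $\tilde L$ is lower triangular, updates at step $j$ only affect columns $k > j$. Hence $V - W = \tilde\Delta\, \tilde N^T$ up to the correct sign and triangular convention. Using $\tilde N = L^{-1} \mathrm{diag}(L_{j,j})$ then gives $(V-W)L^T = \tilde\Delta\,\mathrm{diag}(L_{j,j})$. I will check the triangular/transposition conventions of RevCholesky carefully at this point.

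Consequently $D_A = \frac{1}{mn}\sum_{i,j} L_{j,j}^2 \tilde\Delta_{i,j}^2$. Each $\tilde\Delta_{i,j}$ is a rounding error on a grid of step $\alpha_j$, so it lies in $[-\alpha_j/2, \alpha_j/2]$. With $\alpha_j L_{j,j} = \gamma |A|^{1/2n}$, this yields $D_A \le \frac{\gamma^2}{12}|A|^{1/n}\cdot 12/4$ deterministically. To get the exact constant $1/12$, I need the normalized errors $\tilde\Delta_{i,j}/\alpha_j$ to be asymptotically uniform on $[-1/2,1/2]$ as $\gamma\to 0$.

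For that step I will condition on the past. Given the previously committed errors, $W^{(j)}_{i,j}$ is $W_{i,j}$ plus a shift that is independent of $W_{i,j}$, because it depends only on columns $k<j$. Since $W_{i,j}$ has a smooth Gaussian density, the fractional part of $W^{(j)}_{i,j}/\alpha_j$ converges to uniform as $\alpha_j\to 0$, uniformly over shifts; the total variation bound is $O(\alpha_j/\sigma_W)$. This gives $\E\tilde\Delta_{i,j}^2 = \frac{\alpha_j^2}{12}(1+o(1))$ and hence the distortion formula.

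For the rate, the plan is to bound $H(Z) \le \sum_{i,j} H(Z_{i,j})$ from above and to obtain a matching lower bound via the chain rule. $Z_{i,j}$ is the rounding of $W^{(j)}_{i,j}/\alpha_j$. The quantity $W^{(j)}_{i,j}$ is $W_{i,j}$ plus a bounded correction, since the past errors are $O(\alpha)$ and $\tilde N$ is fixed, so $W^{(j)}_{i,j}$ has variance $\sigma_W^2(1+o(1))$ at high rate. By the standard high-resolution entropy approximation $H(\mathrm{round}(X/\delta)) = h(X) - \log\delta + o(1)$, each entry contributes $\tfrac12\log(2\pi e\sigma_W^2) - \log\alpha_j$.

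For the lower bound, $Z$ determines $V$, and $W$ is recoverable from $V$ up to errors of size $O(\gamma)$. I will use $H(Z)\ge I(W;Z)\ge h(W)-h(W\mid V)$, which matches the upper bound to $o(mn)$. Averaging gives
\[ -\frac1n\sum_j\log\alpha_j = -\log\gamma - \tfrac{1}{2n}\log|A| + \tfrac1n\sum_j \log L_{j,j} = -\log\gamma, \]
because $\prod_j L_{j,j}^2 = |A|$. This proves the rate claim.

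Finally I eliminate $\gamma$: $2^{-2R} = \gamma^2/(2\pi e\sigma_W^2)\cdot(1+o(1))$, so $D_A = \frac{2\pi e}{12}\sigma_W^2|A|^{1/n}2^{-2R}(1+o(1))$. Comparing with \cref{thm:itlimit} gives the $\tfrac12\log(2\pi e/12)$ gap.

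I expect the main obstacle to be the rate step, in particular making the high-resolution entropy approximation uniform over the dependent, data-dependent shifts and across $mn\to\infty$ entries. I would handle it by conditioning on past columns and bounding the discrepancy via Gaussian smoothness uniformly in the shift.
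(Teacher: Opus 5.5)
Your proposal is correct and is essentially the argument the paper sketches (it defers rigor to prior work): the identity $(V-W)L^T=\tilde\Delta\,\mathrm{diag}(L_{j,j})$ is the Babai/lattice statement that the transformed errors $L(V_{i,:}-W_{i,:})^T$ lie in the box $E$ with sides $\alpha_j L_{j,j}$. The scale choice makes $E$ a cube, high-rate uniformity gives the factor $1/12$, and the cell volume fixes the rate.

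The one step to tighten is your lower bound on $H(Z)$. Knowing only that $W-V=O(\gamma)$ entrywise bounds $h(W\mid V)$ only up to a constant per entry, not $o(1)$. You should instead use the exact cell volume $\prod_j\alpha_j$ per row: the factors $L_{j,j}$ cancel against $\det L^{-T}=1/\prod_j L_{j,j}$, which gives $H(Z)\ge mn\bigl(\tfrac12\log(2\pi e\sigma_W^2)-\log\gamma\bigr)$. Alternatively, use your chain-rule bound $H(Z_{i,j}\mid Z_{i,<j})\ge H(Z_{i,j}\mid W_{i,<j})$ together with the uniform-in-shift estimate.
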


The scaling factor choice $\alpha_j \propto 1/L_{j,j}$ is known as the high-rate waterfilling rule.
\cref{alg:gptq} with these scaling factors, and potentially with other GPTQ improvements on top, is called the ``WaterSIC'' algorithm, see \citet{lifar2026watersic}.

\begin{figure}[h]
  \centering
  \includegraphics[width=0.9\textwidth]{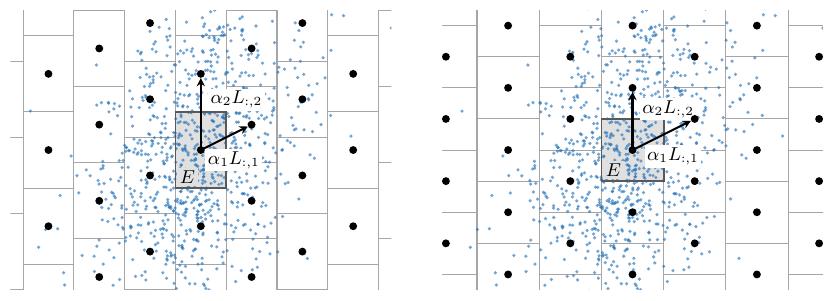}
  \caption{Babai rounding for $L=\bigl(\begin{smallmatrix}2&0\\1&3\end{smallmatrix}\bigr)$.
    Lattice points $L\operatorname{diag}(\alpha)\Z^2$ are represented by black dots.
    Target vectors $Lw$ with $w\sim\cN(0,I_2)$ are represented by blue dots.
    Each target is mapped to the lattice point at the center of its cell.
    The fundamental region $E$ that contains the quantization errors is highlighted.
    On the left, the scaling factors are $\alpha=(1,1)$, giving rectangular cells.
    On the right, they are chosen optimally as $\alpha=(3/\sqrt{6},\,2/\sqrt{6})$, giving square cells while preserving the volume.}
  \label{fig:babai}
\end{figure}

We will provide some intuition for why $\alpha_j \propto 1/L_{j,j}$ is the correct choice of scaling factors, as this will become important in \cref{sec:flipflophessian}.
For this, we take a lattice perspective \citep{birnick2026lattice,chen2026geometry}:
\cref{alg:gptq} is equivalent to running Babai's nearest plane algorithm \citep{babai1986lovasz} on the lattice $L \mathrm{diag}(\alpha) \Z^n$ with target vectors $L W_{i,:}^T$.
It tiles $\R^n$ with copies of a rectangular fundamental region, where the rectangle has side lengths $(\alpha_1 L_{1,1},\dots,\alpha_n L_{n,n})$.
Each target vector is quantized to the lattice vector at the center of the rectangle that it lies inside.
Denote the rectangle around the zero vector by $E = \prod_{j=1}^n [-\alpha_j L_{j,j}/2, \alpha_j L_{j,j}/2]$.
The quantization error vectors $L (V_{i,:} - W_{i,:})^T$ lie in this region.
In the high-rate regime (where the scaling factors approach zero), the error vectors are approximately uniformly distributed within $E$.
Thus the distortion is:
\begin{equation}
\label{eq:distortionuniform}
D_A = \frac{1}{n} \E_{W,i} \Big[ \lVert (V_{i,:} - W_{i,:}) L^T \rVert_F^2 \Big] \approx \frac{1}{n} \E_{e \sim \mathrm{Unif}(E)} \Big[ \lVert e \rVert_2^2 \Big] = \frac{1}{12} \frac{1}{n} \sum_{j=1}^n (\alpha_j L_{j,j})^2
\end{equation}
Moreover, fixing the rate $R$ corresponds to fixing the \emph{volume} of the rectangle $E$.
So minimizing distortion at a given rate corresponds to minimizing \cref{eq:distortionuniform} while leaving $\mathrm{Vol}(E) = \prod_{j=1}^n \alpha_j L_{j,j}$ fixed.
This is achieved by making the side lengths of $E$ all equal, so that it becomes a cube.
This yields $\alpha_j \propto 1/L_{j,j}$.
See \cref{fig:babai} for a visualization.

\section{Two-Sided Rate Allocation}
\label{sec:twosidedwaterfilling}

We aim to unify the two-sided \cref{alg:twosidedgptq}
with the information-theoretic view on \cref{alg:gptq}.
The first step is to choose the scaling factors $\alpha$ and $\beta$ for \cref{alg:twosidedgptq} appropriately.
Perhaps unsurprisingly, we show that the optimal choice is $\alpha_j \propto 1/L^{(A)}_{j,j}$ and $\beta_i \propto 1/L^{(B)}_{i,i}$.
This follows directly from viewing \cref{alg:twosidedgptq} as being equivalent to \cref{alg:gptq} on a transformed input.
We establish the corresponding information-theoretic and achievability theorems for the two-sided \cref{alg:twosidedgptq}.

Concretely, let $W_{i,j} \sim \cN(0,\sigma_W^2)$ iid Gaussian, let the encoder $f\colon \R^{m \times n} \times \R^{n \times n} \times \R^{m \times m} \to \{0,1\}^*$ take $(W,A,B)$, let the decoder $g\colon \{0,1\}^* \to \R^{m \times n}$ be as before, and define rate and distortion in the two-sided setting as follows:
\begin{equation}
\label{eq:twosidedratedistortiondef}
D_{A,B} = \frac{1}{m n} \E_W \Big[ \lVert (B^{1/2})^T (V - W) \cdot A^{1/2} \rVert_F^2 \Big]
\qquad
R = \frac{1}{m n} \E_W \Big[ \mathrm{length}(f(W,A,B)) \Big]
\end{equation}

Then we have the following results as the two-sided versions of \cref{thm:itlimit} and \cref{thm:gptq}.

\begin{corollary}[IT lower bound]
\label{thm:twosideditlimit}
Suppose that $W_{i,j} \sim \cN(0,\sigma_W^2)$ is iid Gaussian and $A,B$ are positive definite.
Then any encoder-decoder pair $(f,g)$ satisfies:
\begin{equation*}
D_{A,B} \geq \sigma_W^2 |A|^{1/n} |B|^{1/m} 2^{-2R}
\end{equation*}
\end{corollary}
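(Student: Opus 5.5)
The plan is to reduce the two-sided problem to the one-sided setting of \cref{thm:itlimit} by vectorization. Set $w = \vecop(W) \in \R^{mn}$ and regard it as a $1 \times mn$ weight matrix (a single row) with Gram matrix $\mathcal{A} = A \otimes B \in \R^{mn \times mn}$. As noted in \cref{sec:bg_twosided}, we have $(A^{1/2} \otimes B^{1/2})(A^{1/2} \otimes B^{1/2})^T = A \otimes B$. Hence
\begin{equation*}
\lVert (B^{1/2})^T (V - W) A^{1/2} \rVert_F^2 = \lVert (\vecop(V) - \vecop(W))^T \mathcal{A}^{1/2} \rVert_2^2 ,
\end{equation*}
so $D_{A,B}$ coincides with the one-sided distortion $D_{\mathcal{A}}$ of \cref{eq:ratedistortiondef}, now for a $1 \times mn$ matrix with input dimension $mn$. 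The normalization $\frac{1}{mn}$ is the same in both definitions.

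Next I will transfer encoder-decoder pairs. Given any pair $(f,g)$ for the two-sided problem, define $f'(w^T, \mathcal{A}) := f(W, A, B)$ and $g' := \vecop(g(\cdot))^T$. Since $A$ and $B$ are fixed deterministic matrices, $f'$ may depend on them implicitly. Equivalently, $A$ and $B$ can be recovered from $\mathcal{A}$ up to a scalar factor, which does not change $A \otimes B$ or the distortion. This new pair has the same rate $R$ and the same distortion. The entries of $w$ are iid $\cN(0,\sigma_W^2)$, and $\mathcal{A}$ is positive definite because its eigenvalues are products of the positive eigenvalues of $A$ and $B$.

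\cref{thm:itlimit} then gives $D_{A,B} \geq \sigma_W^2 |A \otimes B|^{1/(mn)} 2^{-2R}$. It remains to use the standard identity $|A \otimes B| = |A|^m |B|^n$, which follows from the eigenvalue description or from $A \otimes B = (A \otimes I_m)(I_n \otimes B)$. This gives $|A\otimes B|^{1/(mn)} = |A|^{1/n}|B|^{1/m}$, which is the claimed bound.

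The only step that needs care is whether \cref{thm:itlimit} applies in the degenerate case of a single row ($m=1$ in its notation). Its bound is a Shannon lower bound on the rate-distortion function of a Gaussian vector under a weighted quadratic distortion, and that bound holds for any number of rows and any dimension. If needed, I would re-derive it directly. Whiten by setting $u = \mathcal{A}^{1/2\,T} w \sim \cN(0, \sigma_W^2 \mathcal{A})$. Then use $mnR \geq H(f) \geq I(u;\hat u) \geq h(u) - h(u - \hat u)$, and bound $h(u-\hat u) \leq \frac{mn}{2}\log(2\pi e D)$ by the maximum-entropy property. With $h(u) = \frac{1}{2}\log\big((2\pi e \sigma_W^2)^{mn}|\mathcal{A}|\big)$, this yields exactly $D \geq \sigma_W^2 |\mathcal{A}|^{1/(mn)} 2^{-2R}$.
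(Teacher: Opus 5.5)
Your proposal is correct and follows the paper's proof. Both treat $\vecop(W)^T$ as a single-row weight matrix with Gram matrix $A \otimes B$, invoke \cref{thm:itlimit} (which is stated for any number of rows, including one), and use $|A\otimes B|^{1/(mn)} = |A|^{1/n}|B|^{1/m}$. Your explicit encoder--decoder transfer is more careful than the paper's one-line remark, and your fallback Shannon-lower-bound derivation is sound; the only slip is that $S^T w$ has covariance $\sigma_W^2 S^T S$, which equals $\sigma_W^2\mathcal{A}$ only for a symmetric square root but always has the same determinant $\sigma_W^{2mn}|\mathcal{A}|$, so the bound is unaffected.
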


\begin{proof}
Define $\hat{n} := m n$ and $\hat{m} := 1$, as well as $\hat{A} := A \otimes B \in \R^{\hat{n} \times \hat{n}}$ and $\hat{W} := \vecop(W)^T \in \R^{\hat{m} \times \hat{n}}$.
Then note that iid $\hat{W}_{i,j} \sim \cN(0,\sigma_W^2)$ and invoke \cref{thm:itlimit} with $\hat{W}$ and $\hat{A}$.
Lastly, note that the rate and distortion definitions \labelcref{eq:ratedistortiondef} for $\hat{A},\hat{W}$ transform exactly to the definitions \labelcref{eq:twosidedratedistortiondef} for $A,B,W$, e.g., $D_{A,B} = D_{\hat{A}}$, and that $|\hat{A}|^{1/\hat{n}} = |A \otimes B|^{1/(mn)} = (|A|^m |B|^n)^{1/(mn)} = |A|^{1/n} |B|^{1/m}$.
So the conclusion of our invocation of \cref{thm:itlimit} is exactly the conclusion of this corollary.
\end{proof}

\begin{corollary}
\label{thm:twosidedgptq}
Suppose that $W_{i,j} \sim \cN(0,\sigma_W^2)$ is iid Gaussian and $A,B$ are positive definite.
Additionally fix an entropy coding scheme that encodes $Z$ with expected length $H(Z) + o(m n)$.
Then \cref{alg:twosidedgptq} with scale choice $\alpha_j = \sqrt{\gamma} \cdot |A|^{1 / 2n} / L^{(A)}_{j,j}$ and $\beta_i = \sqrt{\gamma} \cdot |B|^{1 / 2m} / L^{(B)}_{i,i}$ achieves
\begin{equation*}
D_{A,B} = \frac{\gamma^2}{12} |A|^{1/n} |B|^{1/m} \cdot (1 + o(1))
\qquad
\qquad
R = \frac{1}{2} \log(2 \pi e \sigma_W^2) - \log \gamma + o(1)
\end{equation*}
as $\gamma \to 0$ and $mn \to \infty$.
This yields the rate-distortion tradeoff
\begin{equation*}
D_{A,B} = \frac{2 \pi e}{12} \cdot \sigma_W^2 |A|^{1/n} |B|^{1/m} 2^{-2R} \cdot (1 + o(1))
\end{equation*}
which, in the limit, is within $\frac{1}{2} \log(2 \pi e / 12) \approx 0.255$ bits of the limiting rate from \cref{thm:twosideditlimit}.
\end{corollary}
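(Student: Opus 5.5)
Like the proof of \cref{thm:twosideditlimit}, this is a reduction to the one-sided result, here \cref{thm:gptq}. Set $\hat{n} := mn$, $\hat{m} := 1$, $\hat{W} := \vecop(W)^T \in \R^{1 \times mn}$ and $\hat{A} := A \otimes B$. The entries of $\hat{W}$ are iid $\cN(0,\sigma_W^2)$, and $\hat{A}$ is positive definite because $A$ and $B$ are. The plan has three steps.

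\emph{Step 1: the two algorithms coincide.} I will show that \cref{alg:twosidedgptq} on $(W,A,B,\alpha,\beta)$ produces the same $Z$ as \cref{alg:gptq} on $(\hat{W},\hat{A},\hat{\alpha},\hat{\beta})$, where $\hat{\alpha} := \alpha \otimes \beta$ and $\hat{\beta} := 1$. Concretely, $\hat{\alpha}$ is the vector with entry $\alpha_j\beta_i$ at position $(j-1)m+i$.

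Since $\mathrm{RevCholesky}$ is multiplicative over Kronecker products, we have $L^{(\hat A)} = L^{(A)} \otimes L^{(B)}$. Its inverse is $\tilde{L}^{(A)} \otimes \tilde{L}^{(B)}$, taken before normalization. The diagonal entries of this inverse are the products $\tilde{L}^{(A)}_{j,j}\tilde{L}^{(B)}_{i,i}$, so normalizing the columns of the Kronecker product gives the Kronecker product of the separately normalized factors.

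Column $(j,i)$ of the normalized matrix is $\tilde{L}^{(A)}_{:,j} \otimes \tilde{L}^{(B)}_{:,i}$. Un-vectorized, this column is the rank-one update $\tilde{L}^{(B)}_{:,i} (\tilde{L}^{(A)}_{:,j})^T$ used in the inner loop of \cref{alg:twosidedgptq}. The column-major order of $\vecop$ ($i$ fastest, then $j$) matches the loop order, and the rounding steps agree, since $\hat{\alpha}_{(j,i)} = \alpha_j\beta_i$.

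The resulting $V$ also agrees entrywise. Hence $D_{A,B} = D_{\hat{A}}$ and the two rates are equal, exactly as in the proof of \cref{thm:twosideditlimit}.

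\emph{Step 2: the scale choices match.} Using $|\hat{A}|^{1/\hat n} = |A|^{1/n}|B|^{1/m}$ and $L^{(\hat A)}_{(j,i),(j,i)} = L^{(A)}_{j,j}L^{(B)}_{i,i}$, the prescribed scales give
\begin{equation*}
\alpha_j\beta_i = \gamma\, |A|^{1/2n}|B|^{1/2m} / (L^{(A)}_{j,j}L^{(B)}_{i,i}) = \gamma\, |\hat{A}|^{1/2\hat{n}} / L^{(\hat A)}_{(j,i),(j,i)}.
\end{equation*}
This is precisely the scale choice of \cref{thm:gptq} with parameter $\gamma$; this is why each factor carries $\sqrt{\gamma}$.

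\emph{Step 3: apply \cref{thm:gptq}.} Invoking \cref{thm:gptq} with $\hat{W}, \hat{A}$ and $\hat{n} = mn \to \infty$, $\gamma \to 0$ yields the stated $D_{A,B}$, the stated rate $R$, and the rate-distortion tradeoff. Comparing with \cref{thm:twosideditlimit} gives the $0.255$-bit gap.

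\emph{Main obstacle.} The only real subtlety is the regime of \cref{thm:gptq}. It is stated for $\hat{m}=1$ row and $\hat{n} \to \infty$ columns, so I need to check that its asymptotics, which come from the high-rate uniform-error approximation of \cref{eq:distortionuniform}, do not require many rows. Since that per-row approximation holds per coordinate as $\gamma\to 0$ and averages over the $\hat n$ coordinates, the limit is still valid. If \cref{thm:gptq} were only available for fixed $n$ with $m\to\infty$, I would instead rerun its per-coordinate argument directly on the Kronecker lattice $L^{(\hat A)}\mathrm{diag}(\hat\alpha)\Z^{mn}$.
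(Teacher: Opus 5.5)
Your proposal is correct and is essentially the paper's proof. It uses the same vectorization reduction to \cref{thm:gptq} with $\hat{A} = A \otimes B$, $\hat{W} = \vecop(W)^T$, $\hat{\alpha} = \alpha \otimes \beta$, $\hat{\beta} = 1$, and the same check that $\alpha_j \beta_i = \gamma |\hat{A}|^{1/2\hat{n}} / \hat{L}_{\hat{j},\hat{j}}$. Your Step 1 spells out the algorithm equivalence (Kronecker structure of the normalized inverse factor, loop order matching column-major $\vecop$) that the paper only asserts, and your $\hat{m}=1$ worry is already covered because \cref{thm:gptq} is stated for $mn \to \infty$, taken with $\gamma \to 0$ first at fixed dimension.
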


\begin{proof}
We again define $\hat{n} := m n$ and $\hat{m} := 1$, then $\hat{A} := A \otimes B \in \R^{\hat{n} \times \hat{n}}$ and $\hat{W} := \vecop(W)^T \in \R^{\hat{m} \times \hat{n}}$, and also $\hat{\alpha} = \alpha \otimes \beta \in \R^{\hat{n}}$ and $\hat{\beta} = 1 \in \R^{\hat{m}}$.
Define $\hat{L}, L^{(A)}, L^{(B)}$ as the reverse Cholesky decompositions of $\hat{A},A,B$, and note that $\hat{L} = L^{(A)} \otimes L^{(B)}$.
Here, again, we have iid $\hat{W}_{i,j} \sim \cN(0,\sigma_W^2)$ and we invoke \cref{thm:gptq} with $\hat{W}$ and $\hat{A}$.

Now note that calling \cref{alg:gptq} with $\hat{W},\hat{A},\hat{\alpha},\hat{\beta}$, upon unvectorizing the resulting $\hat{V}$ (or $\hat{Z}$) back into an $m$ by $n$ matrix, is equivalent to calling \cref{alg:twosidedgptq} with $W,A,B,\alpha,\beta$.
And if, for any pair $(i,j)$, we define $\hat{j} := (j-1)m + i$, then we have
\begin{equation*}
\hat{\alpha}_{\hat{j}} = (\alpha \otimes \beta)_{\hat{j}} = \alpha_j \beta_i = \sqrt{\gamma} \cdot |A|^{1/2n} / L^{(A)}_{j,j} \cdot \sqrt{\gamma} \cdot |B|^{1/2m} / L^{(B)}_{i,i} = \gamma \cdot |\hat{A}|^{1/2\hat{n}} / \hat{L}_{\hat{j}, \hat{j}}
\end{equation*}
which shows that the scaling factors of this corollary correspond to those prescribed by our invocation of \cref{thm:gptq}.
Again, the rate and distortion definitions \labelcref{eq:ratedistortiondef} and \labelcref{eq:twosidedratedistortiondef} match, $D_{A,B} = D_{\hat{A}}$, so the conclusion of our invocation of \cref{thm:gptq} is exactly the conclusion of this corollary.
\end{proof}

\begin{remark}
The only subtlety is the entropy coding scheme for $Z$.
For the algorithm from \cref{thm:gptq}, \citet{lifar2026watersic} suggest Huffman coding for every column, which is possible since all entries in one column $j$ have the same scaling factor $\alpha_j$, due to the row scaling factors $\beta = (1,\dots,1)$ being trivial.
For the algorithm in this corollary, this kind of Huffman coding is no longer efficient enough since both $\alpha$ and $\beta$ are nontrivial, and thus every entry $(i,j)$ has a unique scaling factor $\alpha_j \beta_i$.
However, other entropy coding schemes can deal with this, such as variants of arithmetic coding.
\end{remark}

\section{Kronecker-Hessian Mismatch Factor}
\label{sec:mismatchfactor}
So far, we have analyzed the two-sided algorithm with the distortion measure $D_{A,B}$, which implicitly assumes a Kronecker-factored Hessian structure $A \otimes B$.
However, in reality, the Kronecker product $A \otimes B$ is only an \emph{approximation} to some ``full Hessian'' $H \in \R^{mn \times mn}$ that truly captures the geometry of the problem, but is too big to directly deal with algorithmically.
In practice, $H$ is usually the Fisher information matrix of $W$, see \cref{sec:flipflophessian}.
The true distortion must be measured with respect to $H$, which leads to the following distortion measure:
\begin{equation}
\label{eq:fulldistortiondef}
D_H = \frac{1}{m n} \E_W \Big[ \lVert (\vecop(V)^T - \vecop(W)^T) \cdot H^{1/2} \rVert_F^2 \Big]
\end{equation}

We now show that when approximating $H$ by $A \otimes B$ and running \cref{alg:twosidedgptq} with $A,B$, the full distortion $D_H$ asymptotically equals the two-sided distortion $D_{A,B}$ from \cref{thm:twosidedgptq} times $\tr(H (A \otimes B)^{-1})/mn$.
Note that it is not generally true that $D_H = D_{A,B} \cdot \tr(H (A \otimes B)^{-1})/mn$.
In general there is a correction term, which for \cref{alg:twosidedgptq} in the setting of our theorem becomes negligible as the quantization errors in $E$ become asymptotically uniformly distributed.

\begin{theorem}
\label{thm:twosidedgptq2}
Suppose that $W_{i,j} \sim \cN(0,\sigma_W^2)$ is iid Gaussian and $H,A,B$ are positive definite.
Additionally fix an entropy coding scheme that encodes $Z$ with expected length $H(Z) + o(m n)$.
Then \cref{alg:twosidedgptq} with scale choice $\alpha_j = \sqrt{\gamma} \cdot |A|^{1 / 2n} / L^{(A)}_{j,j}$ and $\beta_i = \sqrt{\gamma} \cdot |B|^{1 / 2m} / L^{(B)}_{i,i}$ achieves
\begin{equation*}
D_H = \frac{\gamma^2}{12} |A|^{1/n} |B|^{1/m} \frac{\tr(H (A \otimes B)^{-1})}{mn} \cdot (1 + o(1))
\qquad
R = \frac{1}{2} \log(2 \pi e \sigma_W^2) - \log \gamma + o(1)
\end{equation*}
as $\gamma \to 0$ and $mn \to \infty$.
This yields the rate-distortion tradeoff:
\begin{equation*}
D_H = \frac{2 \pi e}{12} \cdot \sigma_W^2 |A|^{1/n} |B|^{1/m} \frac{\tr(H (A \otimes B)^{-1})}{mn} 2^{-2R} \cdot (1 + o(1))
\end{equation*}
\end{theorem}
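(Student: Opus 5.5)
The rate statement and the algorithm are exactly those of \cref{thm:twosidedgptq}, since $H$ enters neither the encoder nor the coding of $Z$. So the rate claim $R = \frac12\log(2\pi e\sigma_W^2) - \log\gamma + o(1)$ is inherited verbatim, and only the distortion formula needs a new argument. As in the proof of \cref{thm:twosidedgptq}, I will work in vectorized form: $\hat{A} = A\otimes B$, $\hat{L} = L^{(A)}\otimes L^{(B)}$, and $\hat{\alpha} = \alpha\otimes\beta$. By the choice of scales, $\hat{\alpha}_{\hat{j}}\hat{L}_{\hat{j},\hat{j}} = \gamma|A|^{1/2n}|B|^{1/2m} =: \delta$ for every $\hat{j}$. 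Hence the fundamental region $E$ of the Babai lattice $\hat{L}\,\mathrm{diag}(\hat{\alpha})\Z^{mn}$ is the cube $[-\delta/2,\delta/2]^{mn}$.

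The key rewriting is the following. Let $e := \hat{L}(\vecop(V)-\vecop(W)) \in E$ be the lattice-coordinate error. Then $\vecop(V)-\vecop(W) = \hat{L}^{-1}e$, and
\begin{equation*}
D_H = \frac{1}{mn}\E\big[e^T \hat{L}^{-T} H \hat{L}^{-1} e\big] = \frac{1}{mn}\tr\big(M\,\E[e e^T]\big), \qquad M := \hat{L}^{-T} H \hat{L}^{-1}.
\end{equation*}
Since $\hat{L}^T\hat{L} = \hat{A}$, we have $\tr M = \tr(H\hat{A}^{-1})$. It therefore suffices to show that the error covariance satisfies $\E[ee^T] = \frac{\delta^2}{12}(I + o(1))$ in a sense strong enough to pass through the trace against $M$. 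That gives $D_H = \frac{\delta^2}{12}\frac{\tr(H\hat{A}^{-1})}{mn}(1+o(1))$, and $\delta^2 = \gamma^2|A|^{1/n}|B|^{1/m}$ yields the claimed formula. The rate-distortion tradeoff then follows by substituting $\gamma = 2^{-R}\sqrt{2\pi e\sigma_W^2}\,(1+o(1))$ from the rate formula, exactly as in \cref{thm:gptq}.

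The main step, and the obstacle, is the covariance claim. Babai's nearest plane gives $e_{\hat{j}} = \delta\cdot\big(\text{fractional residual of } t_{\hat{j}}/\delta\big)$. Here $t_{\hat{j}}$ is the $\hat{j}$th coordinate of the target $\hat{L}\vecop(W)$ shifted by a deterministic function of the earlier errors $e_1,\dots,e_{\hat{j}-1}$. Conditionally on the earlier coordinates, the Gaussian density of $\hat{L}\vecop(W)$ in coordinate $\hat{j}$ has conditional standard deviation of order $\sigma_W\hat{L}_{\hat{j},\hat{j}}$, which does not shrink as $\gamma\to 0$ while $\delta\to 0$. I will argue that each $e_{\hat{j}}/\delta$ given the past is then $\mathrm{Unif}[-1/2,1/2]$ up to an error that vanishes with $\gamma$; this is the standard smooth-density mod-$\delta$ argument, e.g.\ via Fourier coefficients of the wrapped Gaussian decaying like $\exp(-2\pi^2\sigma^2/\delta^2)$. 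Two consequences follow:
\begin{itemize}
\item the diagonal satisfies $\E[e_{\hat{j}}^2] = \frac{\delta^2}{12}(1+o(1))$;
\item for $k<\hat{j}$, conditioning on the past gives $\E[e_{\hat{j}}e_k] = \E[e_k\,\E[e_{\hat{j}}\mid\text{past}]] = o(\delta^2)$.
\end{itemize}

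The delicate part is making the $o(1)$ uniform in $\hat{j}$ and controlling the off-diagonal sum $\sum_{k\neq\hat{j}} M_{k\hat{j}}\E[e_ke_{\hat{j}}]$ relative to $\tr M$ as $mn\to\infty$. I would first try to show that the off-diagonal conditional means are exponentially small in $1/\gamma^2$. The total off-diagonal contribution is then bounded by $\|M\|_1\cdot\exp(-c/\gamma^2)\,\delta^2$, which is negligible against $\tr M\cdot\delta^2$ in the regime $\gamma\to 0$ taken before, or jointly with, $mn\to\infty$. This is the same type of uniformity already implicit in \cref{thm:gptq}, and I would mirror its treatment.
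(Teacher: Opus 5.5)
Your proposal is correct and follows the paper's proof. Like the paper, you vectorize to $\hat{L}=L^{(A)}\otimes L^{(B)}$, observe that the scales make the Babai error region a cube of side $\delta=\gamma|A|^{1/2n}|B|^{1/2m}$, write $D_H=\frac{1}{mn}\tr(\hat{L}^{-T}H\hat{L}^{-1}\E[ee^T])$ with $\tr(\hat{L}^{-T}H\hat{L}^{-1})=\tr(H(A\otimes B)^{-1})$, and use $\E[ee^T]\approx\frac{\delta^2}{12}I$. The only difference is that the paper simply assumes asymptotic uniformity of $e$ on the cube (deferring to \citet{lifar2026watersic}), whereas your conditional wrapped-Gaussian argument actually justifies the needed second moments, including the vanishing cross terms against the non-diagonal matrix $\hat{L}^{-T}H\hat{L}^{-1}$. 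That argument goes through cleanly in the paper's iterated limit ($\gamma\to 0$ first, for fixed $A,B,H$); the ``jointly with $mn\to\infty$'' case you mention would need extra uniformity in $\min_k\hat{L}_{kk}$ and in the ratio of the entrywise $\ell_1$ norm to the trace, which you do not supply but the theorem does not require.
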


\begin{proof}
The setting is as in \cref{thm:twosidedgptq}, which already provides the rate $R = \frac{1}{2} \log(2 \pi e \sigma_W^2) - \log \gamma + o(1)$.
We only have to compute the distortion $D_H$ with respect to the full Hessian.
Recall the intuition from \cref{sec:bg_waterfilling}.
For the two-sided algorithm, the ``error region'' in the lattice space is
\begin{equation*}
E := \prod_{j=1}^n \prod_{i=1}^m [-\alpha_j \beta_i L^{(A)}_{j,j} L^{(B)}_{i,i} / 2, \alpha_j \beta_i L^{(A)}_{j,j} L^{(B)}_{i,i} / 2]
\end{equation*}
The scaling factors are chosen exactly so that this becomes a cube.
Concretely, we have:
\begin{equation*}
E = |A|^{1/2n} |B|^{1/2m} \cdot \prod_{j=1}^n \prod_{i=1}^m [-\gamma / 2, \gamma / 2]
\end{equation*}
The transformed quantization error $\vecop(L^{(B)} (V - W) (L^{(A)})^T)$ lies in $E$.
For simplicity, we work only in the high-rate limit $\gamma \to 0$ and assume that this transformed quantization error vector is uniformly distributed in $E$.
As $\gamma \to 0$ this holds asymptotically because $W$ is normally distributed; intuitively, in \cref{fig:babai}, the rectangles/cubes become so small that the blue dots become approximately uniformly distributed within them.
For a concrete proof of this, see \citet[Appendix B.2]{lifar2026watersic}.
Write $L := L^{(A)} \otimes L^{(B)}$, $d := \vecop(V - W)$, and note that $e := L d$ is the transformed quantization error $\vecop(L^{(B)} (V - W) (L^{(A)})^T) \in E$.
So $d = L^{-1} e$, where $e$ is asymptotically uniformly distributed, giving:
\begin{align*}
D_H &= \frac{1}{mn} \E_W [d^T H d] = \frac{1}{mn} \E_W [\tr(H d d^T)] = \frac{1}{mn} \E_W[\tr(H L^{-1} e e^T L^{-T})] \\
&\overset{\mathclap{\gamma \to 0}}{\asymp} \frac{1}{mn} \tr(H L^{-1} \E_{e \sim \mathrm{Unif}(E)}[e e^T] L^{-T}) = \frac{1}{mn} \frac{|A|^{1/n} |B|^{1/m} \gamma^2}{12} \tr(H L^{-1} I L^{-T}) \\
&= \frac{\gamma^2}{12} |A|^{1/n} |B|^{1/m} \frac{\tr(H (A \otimes B)^{-1})}{mn} \qedhere
\end{align*}
\end{proof}

\cref{thm:twosidedgptq2} now gives us a clear criterion for choosing the Hessian approximation $A \otimes B$ in order to minimize distortion:
We have to minimize the term in the distortion that depends on $A,B$, which is $|A|^{1/n} |B|^{1/m} \tr (H (A \otimes B)^{-1})/mn$.
It is cleanest to additionally normalize this by $|H|^{1/mn}$.
We call the resulting metric the \emph{Kronecker-Hessian mismatch factor} and abbreviate it as $\Phi$:
\begin{equation*}
\Phi(H,A \otimes B) := \frac{\tr(H (A \otimes B)^{-1}) / mn}{|H|^{1/mn} \cdot |A|^{-1/n} |B|^{-1/m}} = \frac{\tr(H (A \otimes B)^{-1}) / mn}{|H|^{1/mn} \cdot |A \otimes B|^{-1/mn}}
\end{equation*}
The mismatch factor has the following additional interpretations.
First, define the \emph{generalized eigenvalues} $\lambda_1, \dots, \lambda_{mn}$ of $(H, A \otimes B)$ as the eigenvalues of $((A \otimes B)^{1/2})^{-1} H ((A \otimes B)^{1/2})^{-T}$.
Then $\Phi$ is the ratio of the arithmetic mean to the geometric mean of those generalized eigenvalues:
\begin{equation*}
\Phi(H, A \otimes B) = \frac{\frac{1}{mn} \sum_{k=1}^{mn} \lambda_k}{(\prod_{k=1}^{mn} \lambda_k )^{1/mn}} = \frac{\mathrm{AM}(\lambda_1, \dots, \lambda_{mn})}{\mathrm{GM}(\lambda_1, \dots, \lambda_{mn})}
\end{equation*}
The AM-GM inequality gives $\Phi(H,A\otimes B)\geq 1$ with equality if and only if $H \propto A \otimes B$.
$\Phi$ is also related to the \emph{coding gain} in transform coding, see \citet[Chapter 8]{gersho2012vector}.

Second, up to scaling $A \otimes B$ by some $s > 0$, it is essentially equal to the exponential of the KL divergence between Gaussians with covariances $H$ and $A \otimes B$, respectively:
\begin{equation}
\label{eq:klrelation}
\log \Phi(H, A \otimes B) = \frac{2}{m n} \min_{s > 0} \KL(\cN(0,H) \,\Vert\, \cN(0,s \cdot A \otimes B))
\end{equation}

\section{The FlipFlop Hessian Approximation}
\label{sec:flipflophessian}

In order to minimize the distortion in \cref{thm:twosidedgptq2}, we have to find a Hessian approximation $A \otimes B$ of $H$ that minimizes $\Phi(H, A \otimes B)$.
In general, this minimization problem has no closed-form solution.
But it turns out that there is an efficient algorithm that converges to the solution.
We work in the setting where $H$ is an expectation over outer products of Kronecker-factored vectors:
\begin{equation*}
H = \E_{x,g} [(x \otimes g)(x \otimes g)^T] = \E_{x,g} [xx^T \otimes gg^T]
\end{equation*}
Here $x \in \R^n$ and $g \in \R^m$, and we will drop the subscript $x,g$ from the expectation for the remainder of the section.
There is a very natural setting where this is the case, namely when $H$ is the Fisher information matrix of $W$, or equivalently the Hessian of the cross-entropy loss with respect to the network's own predictive distribution.
In that case, $x$ is a calibration input of $W$ with output $y = Wx$, a network output is sampled from the model's predictive distribution, and $g$ is the gradient of the negative log-likelihood of this network output with respect to $y$ \citep[see, e.g.,][]{birnick2026bakron}.

We now wish to find $A,B$ that minimize $\Phi(H,A \otimes B)$.
Write $G := g x^T \in \R^{m \times n}$ and note that $\vecop(G) = x \otimes g$.
From trace and Kronecker identities we get:
\begin{equation*}
\tr(H (A \otimes B)^{-1})
= \tr(\E[\vecop(G) \vecop(G)^T] (A^{-1} \otimes B^{-1}))
= \E\tr(B^{-1} G A^{-1} G^T)
\end{equation*}
Thus we can rewrite the objective $\Phi$, and also simplify it by removing constants:
\begin{equation*}
\Phi(H,A \otimes B) \quad\propto\quad |A|^{1/n} |B|^{1/m} \cdot \E\tr(B^{-1} G A^{-1} G^T)
\end{equation*}
Now we fix $A$, and ask for the best choice of $B$ given $A$.
The objective for $B$ is proportional to
\begin{equation*}
|B|^{1/m} \cdot\tr(B^{-1} S) \quad\text{with}\quad S := \E[G A^{-1} G^T] .
\end{equation*}
This is minimized when $B$ is proportional to $S$.
Indeed, the AM-GM inequality applied to the eigenvalues of $B^{-1} S$ says $\tr(B^{-1} S)/m \geq (|S|/|B|)^{1/m}$ and equality holds when the eigenvalues are all the same, leading to $B \propto S$.
Similarly, when $B$ is fixed, then $A$ minimizes $\Phi$ precisely when $A \propto \E[G^T B^{-1} G]$.
This naturally leads to the following alternating (fixed-point) iteration:
\begin{align*}
A_{k+1} &\propto \E[G^T B_k^{-1} G] = \E[g^T B_k^{-1} g \cdot xx^T] \\
B_{k+1} &\propto \E[G A_{k+1}^{-1} G^T] = \E[x^T A_{k+1}^{-1} x \cdot gg^T]
\end{align*}
In maximum likelihood estimation of matrix normal distributions, this is known as the ``flip-flop algorithm'' \citep{dutilleul1999mle}.
If $H$ is positive definite, it converges to a unique (up to scaling) fixed point \citep[Section 2]{hoff2023core}.
The connection to Gaussian maximum likelihood estimation is given by \cref{eq:klrelation}.
Hence we call the resulting approximation $A \otimes B$ of $H$ the \emph{FlipFlop} Hessian.

Interestingly, the algorithm resembles a power iteration that was previously explored for two-sided GPTQ.
Namely, if $A_{k+1}^{-1}$, $B_k^{-1}$ above are replaced by $A_{k+1}$, $B_k$, so that the weighting is in some sense reciprocal, then the algorithm becomes a power iteration that minimizes $\lVert H - A \otimes B \rVert_F$, see \citet{tseng2026modelpreserving,birnick2026bakron}.
We call the resulting approximation the \emph{Frobenius} Hessian.
It can be interpreted as the best ``additive'' Kronecker approximation to $H$, while the FlipFlop Hessian is, in a certain sense, the best ``multiplicative'' approximation.
The FlipFlop Hessian outperforms the Frobenius Hessian in practice, as suggested by the theory, see \cref{app:experiments}.

\section{Experiment Findings}
\label{sec:experiments}

We validate our theory empirically by quantizing transformer language models with the Hessian approximations from \cref{tab:hessian_comparison}.
For the concrete setup and result tables, see \cref{app:experiments}.
We find that generally the two-sided (``WaterKron'') algorithm outperform the purely input-sided (``WaterSIC'') algorithm, \emph{if the Hessian approximation is chosen carefully} like argued in the paper.
The Frobenius variant, as used for example by YAQA \citep{tseng2026modelpreserving}, generally gets \emph{worse} with more iterations (measured up to 3 iterations).
For the larger 8B models it even gets clearly worse than the purely input-sided Hessian.
This supports the argument against minimizing $\lVert H - A \otimes B \rVert_F$.
Our new FlipFlop variant generally \emph{improves} with the second iteration, and is consistently the best of all the variants, supporting the argument that $\Phi(H, A \otimes B)$ is a more appropriate metric to minimize.

\begin{table}[H]
  \caption{Different Hessian approximations $A \otimes B$ of $H = \E[xx^T \otimes gg^T]$.
  Frobenius and FlipFlop approximations are computed using the alternating fixed-point iteration, starting from the identity matrix.}
  \label{tab:hessian_comparison}
  \begin{center}
    \begin{tabular}{lccp{0.3\linewidth}}
      \hline
      Name & Input factor $A$ & Output factor $B$ & Notes \\
      \hline
      Input & $\E[xx^T]$ & $I_m$ & Recovers WaterSIC. \\
      Marginal & $\E[xx^T]$ & $\E[gg^T]$ & Treats $x,g$ as independent.
      \\
      Frobenius & $\E[(g^T B g)\,xx^T]$ & $\E[(x^T A x)\,gg^T]$ & Targets \mbox{$\lVert H-A\otimes B\rVert_F$}. \\
      FlipFlop & $\E[(g^T B^{-1} g)\,xx^T]$ & $\E[(x^T A^{-1} x)\,gg^T]$ & Targets $\Phi(H,A\otimes B)$. \\
      \hline
    \end{tabular}
  \end{center}
\end{table}

\begin{figure}[H]
  \centering
  \includegraphics[width=\linewidth]{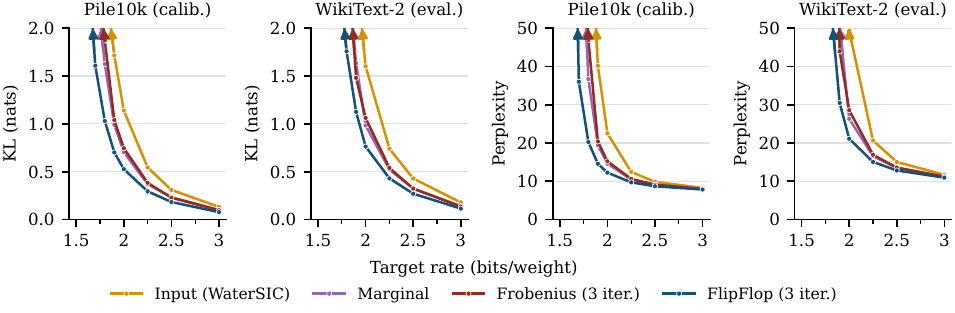}
  \caption{Llama-3.2-1B KL/PPL across rates for various Hessian approximations. (See \cref{fig:llama_1b_rates_all}.)}
  \label{fig:llama_1b_rates}
\end{figure}

\section*{Acknowledgments}

We gratefully acknowledge partial support by the National Science Foundation, via the DMS-2410717 grant.

\bibliography{waterkron}
\bibliographystyle{plainnat}

\appendix
\crefalias{section}{appendix}
\crefalias{subsection}{appendix}
\crefalias{subsubsection}{appendix}

\section{Limitations}

The main limitation for the practical use of WaterKron is carried over from Huffman-GPTQ and WaterSIC, namely the use of entropy coding.
Decoders for entropy coding schemes may be hard to implement efficiently as GPU kernels.
Practical workarounds include the use of compression schemes that are natively supported on the accelerator, or splitting $Z$ into a dense matrix of a fixed datatype and a sparse matrix of outliers.

Additionally, the theory works (naturally) in the high-rate regime, while practical neural network quantization and the experiments are in the low-rate regime.

\section{Experiment Results}
\label{app:experiments}

\textbf{Scope.}
The goal of the experiments is to \emph{isolate the WaterKron and FlipFlop Hessian contribution} by comparing it to the basic WaterSIC algorithm (i.e., the basic GPTQ algorithm with the waterfilling scaling factor choice) in a simple and clean setting without any other algorithmic refinements.
As noted already in \cref{sec:introduction}, several orthogonal improvements of GPTQ have been developed, which would likely improve the empirical results here.
These techniques include drift correction \citep[e.g.,][]{zhang2026qronos}, the use of improved base quantizers \citep[e.g.,][]{savkin2025nestquant}, finetuning the scaling factors, and combination with random/learned rotations; also see the list of techniques applied in \citet{lifar2026watersic}.
We intentionally do \emph{not} apply any of those techniques, but isolate the role of the Hessian approximation by comparing in a basic setting.

\textbf{Runs.}
We quantize and evaluate Llama \citep{grattafiori2024llama} and Qwen \citep{yang2025qwen3} models with up to 8B parameters at a target rate of 2 bits per weight, see \cref{tab:llama_1b_results,tab:llama_3b_results,tab:llama_8b_results,tab:qwen_1_7b_results,tab:qwen_4b_results,tab:qwen_8b_results}.
Additionally, we quantize Llama-3.2-1B at various rates, see \cref{fig:llama_1b_rates_all}.

\textbf{Setup.}
We only quantize linear modules from the transformer blocks, not the embedding or head matrix.
Calibration uses 256 sequences of 2048 tokens from Pile10k, a subset of The Pile \citep{pile}.
Inputs $x$ are always taken from the unquantized model, in contrast to using an $\hat{x}$ that went through the partially quantized model, as is usually done with GPTQ.
The gradients are computed in an empirical Fisher fashion, meaning that labels are not sampled from the model's predictive distribution but taken as the next token from the calibration dataset.
Hessians are damped with $0.1$ times the mean diagonal for both factors, including between iterations of the flip-flop algorithm.
The Hessian approximations that we compare are listed in \cref{tab:hessian_comparison}.
For Frobenius and FlipFlop approximations, which both use the alternating fixed-point iteration, we additionally compare different numbers of iterations.
$A$ is initialized as $I_n$, and then one iteration corresponds to updating both $B$ and $A$ once.
The number of iterations is appended as a suffix to the name, e.g., ``FlipFlop-1'', ``FlipFlop-2'', etc.
We evaluate $\KL(p_{\mathrm{unquantized}}\|p_{\mathrm{quantized}})$ (KL) and perplexity (PPL) on both the calibration dataset (Pile10k) and the WikiText-2 test split \citep{merity2016pointer}, averaged over tokens.

\begin{figure}[h]
  \centering
  \includegraphics[width=\linewidth]{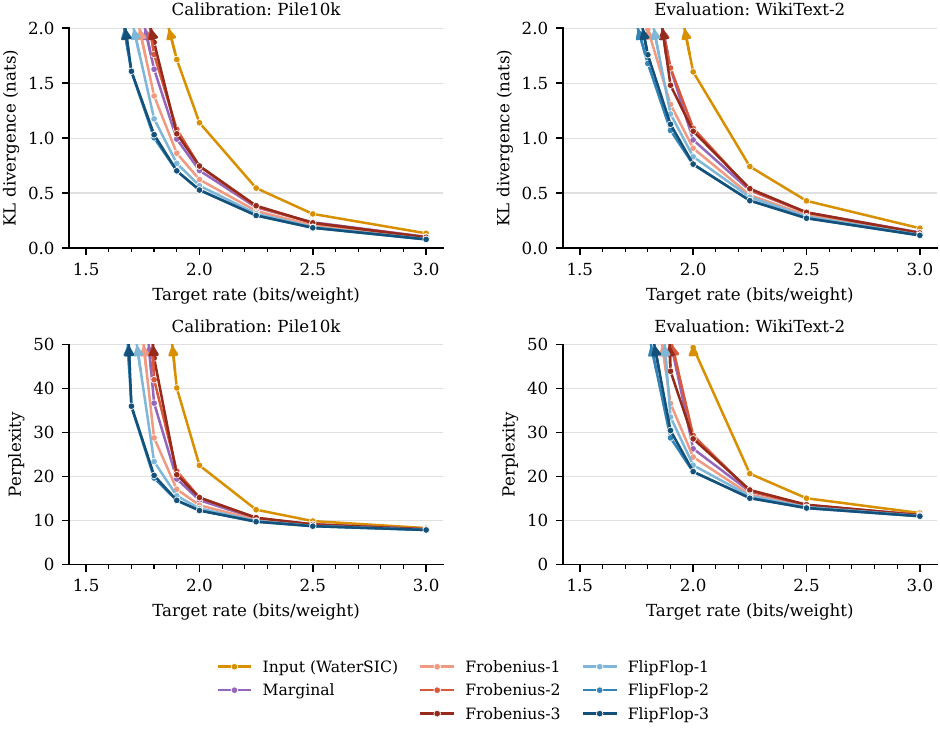}
  \caption{More detailed version of \cref{fig:llama_1b_rates}. Llama-3.2-1B quantized with all eight Hessian variants across various rates. ``FlipFlop'' outperforms all other Hessian approximations.}
  \label{fig:llama_1b_rates_all}
\end{figure}

\begin{table}[H]
  \caption{Llama-3.2-1B at a target rate of 2 bits per weight.}
  \label{tab:llama_1b_results}
  \begin{center}
    \begin{tabular}{lrrrr}
    \hline
    & \multicolumn{2}{c}{Pile10k} & \multicolumn{2}{c}{WikiText-2} \\
    \cline{2-3} \cline{4-5}
    Hessian & KL $\downarrow$ & PPL $\downarrow$ & KL $\downarrow$ & PPL $\downarrow$ \\
    \hline
    Unquantized & 0.00 & 7.3 & 0.00 & 9.8 \\
    \hline
    Input (WaterSIC) & 1.14 & 22.5 & 1.60 & 49.3 \\
    Marginal & 0.70 & 14.6 & 0.98 & 26.3 \\
    Frobenius-1 & 0.62 & 13.5 & 0.91 & 24.4 \\
    Frobenius-2 & 0.74 & 15.2 & 1.09 & 29.4 \\
    Frobenius-3 & 0.75 & 15.2 & 1.06 & 28.6 \\
    FlipFlop-1 & 0.57 & 12.7 & 0.83 & 22.6 \\
    FlipFlop-2 & 0.53 & 12.3 & \textbf{0.76} & \textbf{21.0} \\
    FlipFlop-3 & \textbf{0.53} & \textbf{12.3} & 0.76 & 21.1 \\
    \hline
    \end{tabular}
  \end{center}
\end{table}

\begin{table}[H]
  \caption{Llama-3.2-3B at a target rate of 2 bits per weight.}
  \label{tab:llama_3b_results}
  \begin{center}
    \begin{tabular}{lrrrr}
    \hline
    & \multicolumn{2}{c}{Pile10k} & \multicolumn{2}{c}{WikiText-2} \\
    \cline{2-3} \cline{4-5}
    Hessian & KL $\downarrow$ & PPL $\downarrow$ & KL $\downarrow$ & PPL $\downarrow$ \\
    \hline
    Unquantized & 0.00 & 6.2 & 0.00 & 7.8 \\
    \hline
    Input (WaterSIC) & 0.46 & 9.5 & 0.72 & 15.8 \\
    Marginal & 0.41 & 9.2 & 0.71 & 15.6 \\
    Frobenius-1 & 0.39 & 9.0 & 0.68 & 15.2 \\
    Frobenius-2 & 0.46 & 9.6 & 0.82 & 17.5 \\
    Frobenius-3 & 0.45 & 9.5 & 0.80 & 17.2 \\
    FlipFlop-1 & 0.35 & 8.6 & 0.62 & 14.4 \\
    FlipFlop-2 & \textbf{0.33} & \textbf{8.5} & \textbf{0.57} & \textbf{13.7} \\
    FlipFlop-3 & 0.33 & 8.5 & 0.58 & 13.8 \\
    \hline
    \end{tabular}
  \end{center}
\end{table}

\begin{table}[H]
  \caption{Llama-3.1-8B at a target rate of 2 bits per weight.}
  \label{tab:llama_8b_results}
  \begin{center}
    \begin{tabular}{lrrrr}
    \hline
    & \multicolumn{2}{c}{Pile10k} & \multicolumn{2}{c}{WikiText-2} \\
    \cline{2-3} \cline{4-5}
    Hessian & KL $\downarrow$ & PPL $\downarrow$ & KL $\downarrow$ & PPL $\downarrow$ \\
    \hline
    Unquantized & 0.00 & 5.3 & 0.00 & 6.2 \\
    \hline
    Input (WaterSIC) & 0.41 & 7.9 & 0.62 & 11.6 \\
    Marginal & 0.40 & 7.9 & 0.64 & 11.9 \\
    Frobenius-1 & 0.42 & 8.0 & 0.67 & 12.2 \\
    Frobenius-2 & 0.47 & 8.4 & 0.69 & 12.5 \\
    Frobenius-3 & 0.47 & 8.4 & 0.68 & 12.3 \\
    FlipFlop-1 & 0.35 & 7.5 & 0.60 & 11.4 \\
    FlipFlop-2 & 0.34 & 7.3 & 0.55 & 10.9 \\
    FlipFlop-3 & \textbf{0.34} & \textbf{7.3} & \textbf{0.54} & \textbf{10.8} \\
    \hline
    \end{tabular}
  \end{center}
\end{table}

\begin{table}[H]
  \caption{Qwen3-1.7B at a target rate of 2 bits per weight.}
  \label{tab:qwen_1_7b_results}
  \begin{center}
    \begin{tabular}{lrrrr}
    \hline
    & \multicolumn{2}{c}{Pile10k} & \multicolumn{2}{c}{WikiText-2} \\
    \cline{2-3} \cline{4-5}
    Hessian & KL $\downarrow$ & PPL $\downarrow$ & KL $\downarrow$ & PPL $\downarrow$ \\
    \hline
    Unquantized & 0.00 & 9.1 & 0.00 & 16.7 \\
    \hline
    Input (WaterSIC) & 0.78 & 17.2 & 1.11 & 37.2 \\
    Marginal & 0.74 & 16.1 & 1.09 & 36.1 \\
    Frobenius-1 & 0.64 & 14.2 & 0.90 & 28.8 \\
    Frobenius-2 & 0.76 & 15.7 & 1.07 & 32.3 \\
    Frobenius-3 & 0.77 & 15.6 & 1.08 & 32.5 \\
    FlipFlop-1 & 0.49 & 12.5 & 0.76 & 25.6 \\
    FlipFlop-2 & 0.46 & 12.3 & \textbf{0.68} & 24.0 \\
    FlipFlop-3 & \textbf{0.46} & \textbf{12.3} & 0.69 & \textbf{23.8} \\
    \hline
    \end{tabular}
  \end{center}
\end{table}

\begin{table}[H]
  \caption{Qwen3-4B at a target rate of 2 bits per weight.}
  \label{tab:qwen_4b_results}
  \begin{center}
    \begin{tabular}{lrrrr}
    \hline
    & \multicolumn{2}{c}{Pile10k} & \multicolumn{2}{c}{WikiText-2} \\
    \cline{2-3} \cline{4-5}
    Hessian & KL $\downarrow$ & PPL $\downarrow$ & KL $\downarrow$ & PPL $\downarrow$ \\
    \hline
    Unquantized & 0.00 & 7.8 & 0.00 & 13.7 \\
    \hline
    Input (WaterSIC) & 0.39 & 10.1 & 0.61 & 19.7 \\
    Marginal & 0.36 & 9.9 & 0.58 & 19.2 \\
    Frobenius-1 & 0.42 & 10.2 & 0.61 & 18.9 \\
    Frobenius-2 & 0.47 & 10.6 & 0.66 & 19.9 \\
    Frobenius-3 & 0.47 & 10.8 & 0.66 & 19.9 \\
    FlipFlop-1 & 0.34 & 9.7 & 0.54 & 18.8 \\
    FlipFlop-2 & 0.33 & 9.6 & 0.53 & 18.6 \\
    FlipFlop-3 & \textbf{0.33} & \textbf{9.6} & \textbf{0.52} & \textbf{18.3} \\
    \hline
    \end{tabular}
  \end{center}
\end{table}

\begin{table}[H]
  \caption{Qwen3-8B at a target rate of 2 bits per weight.}
  \label{tab:qwen_8b_results}
  \begin{center}
    \begin{tabular}{lrrrr}
    \hline
    & \multicolumn{2}{c}{Pile10k} & \multicolumn{2}{c}{WikiText-2} \\
    \cline{2-3} \cline{4-5}
    Hessian & KL $\downarrow$ & PPL $\downarrow$ & KL $\downarrow$ & PPL $\downarrow$ \\
    \hline
    Unquantized & 0.00 & 6.5 & 0.00 & 9.7 \\
    \hline
    Input (WaterSIC) & 0.28 & 7.9 & 0.45 & 13.0 \\
    Marginal & 0.29 & 8.0 & 0.47 & 13.1 \\
    Frobenius-1 & 0.31 & 8.0 & 0.49 & 13.6 \\
    Frobenius-2 & 0.34 & 8.2 & 0.50 & 13.6 \\
    Frobenius-3 & 0.34 & 8.2 & 0.49 & 13.4 \\
    FlipFlop-1 & 0.26 & 7.8 & 0.45 & 13.0 \\
    FlipFlop-2 & 0.25 & \textbf{7.7} & \textbf{0.42} & \textbf{12.7} \\
    FlipFlop-3 & \textbf{0.25} & 7.7 & 0.42 & 12.7 \\
    \hline
    \end{tabular}
  \end{center}
\end{table}

\section{Technical Details}
\label{app:technicalities}

For the theorems in this paper, there are two technicalities that we deliberately left without comment in order to focus on the core statements.
In this section, we clarify these technicalities.

\subsection{Encoding of the Scaling Factors \texorpdfstring{$\alpha$, $\beta$}{alpha, beta}}
\label{sec:alphabetacoding}

\cref{thm:gptq}, \cref{thm:twosidedgptq}, and \cref{thm:twosidedgptq2} only consider the encoding of $Z \in \Z^{m \times n}$ and ignore the encoding of the scaling factors $\alpha \in \R^n$ and $\beta \in \R^m$.
This is to keep the theorem statements clean.
In practice, the memory cost of the scaling factors is negligible.
For the theoretical coding argument, we can give the decoder direct access to $\alpha, \beta$, so there is no need to encode them.
In fact, the decoder can even be given full access to $A$, $B$ (which in particular determine $\alpha, \beta$), as the lower bounds of \cref{thm:itlimit} and \cref{thm:twosideditlimit} still hold true even with this more powerful decoder.

Concretely, the decoder $g$ would then be a function $g\colon \{0,1\}^* \times \R^{n \times n} \times \R^{m \times m} \to \R^{m \times n}$, taking a string and $A$, $B$, and producing $V$.
For \cref{alg:twosidedgptq}, the string would be the entropy-coded $Z$, and $\alpha,\beta$ as defined in \cref{thm:twosidedgptq} can be found from $A,B$, so the decoder can find $V = \beta \alpha^T \odot Z$.
Now, with this more powerful decoder, one could worry that the lower bound of \cref{thm:twosideditlimit} would not hold anymore, as a decoder with access to $A,B$ perhaps allows for a much more efficient scheme; but this is not the case.
Indeed, the standard proof of \cref{thm:twosideditlimit} already assumes decoder access to $A$ and $B$, see \citet{ordentlich2026highratematmul2, lifar2026watersic}.

\subsection{Limit \texorpdfstring{$\gamma \to 0$ and $mn \to \infty$}{gamma -> 0 and mn -> infinity}}
\cref{thm:gptq}, \cref{thm:twosidedgptq}, and \cref{thm:twosidedgptq2} all consider the limit $\gamma \to 0$ and $mn \to \infty$.
The limit $\gamma \to 0$ makes sure that we are in the high-rate regime where the transformed quantization errors become uniformly distributed.
The limit $m n \to \infty$, i.e., growing matrix size, is to ensure the efficiency of the coding scheme, so that it can approach the entropy of the distribution of $Z$.
Note that due to the second limit, these theorems are technically not about a fixed pair $A,B$, but about all of them at the same time.
In this case, one needs to be careful about how the joint limit $\gamma \to 0$ and $mn \to \infty$ is taken.
We understand it in an iterated sense: First take $\gamma \to 0$ for fixed $A,B$, then take $mn \to \infty$.
Then for each concrete instance of $A,B$ it is ensured that we are in the high-rate limit where the transformed quantization errors are uniformly distributed.

\end{document}